\documentclass{article}


\usepackage[preprint]{neurips_2023}

\usepackage{graphicx}
\usepackage{caption}
\usepackage{booktabs} 
\usepackage{mwe}
\usepackage{hyperref}
\usepackage{xspace} 




\usepackage[utf8]{inputenc} 
\usepackage[T1]{fontenc}    
\usepackage{url}            
\usepackage{booktabs}       
\usepackage{amsfonts}       
\usepackage{nicefrac}       
\usepackage{microtype}      
\usepackage{xcolor}         
\usepackage{amsmath}
\usepackage{amssymb}
\usepackage{mathtools}
\usepackage{amsthm}
\usepackage{physics}

\theoremstyle{plain}
\newtheorem{theorem}{Theorem}[section]

\newtheorem{lemma}[theorem]{Lemma}

\theoremstyle{definition}

\newtheorem{assumption}[theorem]{Assumption}
\theoremstyle{remark}

\newcommand{\N}{\mathbb{N}}
\newcommand{\E}{\mathbb{E}}

\newcommand{\KRR}{\ensuremath{\mathsf{KRR}}\xspace}

\newcommand{\rmse}{\textsc{rmse}\xspace}

\newcommand{\krr}{\textsc{krr}\xspace}

\newcommand{\R}{\ensuremath{\mathbb{R}}}

\newcommand{\eps}{\ensuremath{\varepsilon}}
\newcommand{\ddim}{\ensuremath{\mathsf{dd}}}

\newcommand{\etal}{\emph{et{.}al{.}}}
\newcommand{\jeff}[1]{{\color{blue}$\langle$\textsf{\small Jeff: } #1$\rangle$}}


\title{Locally Adaptive and Differentiable Regression \thanks{
Thanks to NSF CDS\&E-1953350, CCF-2115677, IIS-1816149, and IIS-2311954.}}

%

\author{
  Mingxuan Han \\
  School of Computing\\
  The University of Utah\\
  Salt Lake City, UT 84112 \\
  \texttt{u1209601@utah.edu} \\
  \And
  Varun Shankar \\
  School of Computing \\
  The University of Utah \\
  Salt Lake City, UT 84112 \\
  \texttt{shankar@cs.utah.edu} \\
  \AND
  Chenglong Ye \\
  Department of Statistics \\
  The University of Kentucky  \\
  \texttt{chenglong.ye@uky.edu} \\
  \And
  Jeff M Phillips   \\
  School of Computing \\
  The University of Utah \\
  Salt Lake City, UT 84112 \\
  \texttt{jeffp@cs.utah.edu} \\
}

\begin{document}
\maketitle

\begin{abstract}
Over-parameterized models like deep nets and random forests have become very popular in machine learning. However, the natural goals of continuity and differentiability, common in regression models, are now often ignored in modern overparametrized, locally-adaptive models. We propose a general framework to construct a global continuous and differentiable model based on a weighted average of locally learned models in corresponding local regions. This model is competitive in dealing with data with different densities or scales of function values in different local regions. We demonstrate that when we mix kernel ridge and polynomial regression terms in the local models, and stitch them together continuously, we achieve faster statistical convergence in theory and improved performance in various practical settings.  

\end{abstract}

\section{Introduction}

Regression is one of the fundamental tasks within machine learning.  Historically, most techniques used polynomial or kernel models, and these had the advantage that they generalize well on noisy data and are continuous and even differentiable.  However, recently with the rise of enormous, accurate data sets and abundant computational power, over-parameterized models like deep nets and random forests have become very popular.  Unfortunately, these complex and often opaque models either eschew guarantees on continuity and differentiability to achieve highly accurate, and/or are not really locally adaptive model as one may presume.  

In this paper, we study regression models that achieve both (a) \textbf{continuity} in prediction and (b) \textbf{local adaptivity} in the model.  
By \emph{continuity}, we mean that as the point of prediction changes infinitesimally, the output of that prediction also changes infinitesimally.  At the minimum we seek $C^0$-continuity so the value predicted is stable, but our methods will address $C^1$-continuity (and in general $C^t$-continuity) that can ensure properties like continuity in gradients as well.  Methods like decision trees or neural networks with ReLu activation do not achieve $C^0$ or $C^1$ continuity, respectively; at decision thresholds there are jumps in prediction.  
By \emph{locally adaptivity}, we mean that a prediction at a query point only depends on nearby data points (for some notion of nearby).  In particular, the prediction for a query for a locally adaptive model should not be affected by the modification of training data far away (e.g., an outlier, or an unrelated domain adaptation).  Such locally adaptive models are important when the scale of response values change significantly outside of local regions.  
Discontinuous models like decision trees, random forests, or ReLu-activated neural networks might achieve this, but models with global blending functions or continuous sigmoid-activated neural networks do not.  
In general, a difficulty arises in that locally-defined models typically have seams between regions which cause unintuitive jumps between close predicted values, whereas continuous models typically do not actually ensure locality.  

In this paper we describe a highly effective and efficient regression model that is both locally adaptive and $C^t$-continuous for any constant $t$.  
To achieve this, we adapt techniques from interpolation theory to build highly adaptive and overparametrized models -- but without relying on the necessity to interpolate the data at high precision.  Our method starts by identifying overlapping local neighborhoods of a dataset, and building a regression model for each region.  These local models allow the method to adapt to local variation in data scale and density.  
The key step is to then ``stitch'' these local models together in a continuous way that forms a global continuous model, maintaining local adaptivity.  

Our approach towards stitching local models is based on a ``partition of unity" (PU) perspective where each local region is defined by a Euclidean ball, and a weighting function from a centrally symmetric radial-basis kernel.  Then given a query point, the local regions which contain this query points are averaged together proportional to their kernel weight.  In the interior of these regions, as long as the models are continuous and the kernels are continuous, then the global model inherits that continuity.  However, the boundary of a model region presents a challenge.  To address this we employ Wendland kernels which are compactly-supported, continuous, and differentiable reproducing radial-basis kernels (over a finite dimensional space) defined over a fixed radius.  Critically, the function values go to zero and derivatives vanish at the model boundary.  While the implications of these remarkable kernels are well-known within the field of interpolation theory and kriging, they seem mostly unknown within AI and machine learning.  

Given this overall approach to continuously stitch together local models, we can use any continuous and differentiable regression approach within each local region.  We find that a method which uses a combination of kernel and polynomial basis elements, which we call KRR-POLY, does the best job of fitting local data and generalizing to held out data.  Moreover, we formalize how this method provides improved statistical convergence in comparison to the simpler kernel ridge regression model.  
In fact, we observe that our overall method PU-KRR-POLY outperforms in generalization in comparison to other powerful regression models on a variety of data sets, especially when the data requires local adaptation.  

Moreover, because our method is differentiable, we can also directly compute the partial derivatives of our models.  This is a critical step in many regression tasks such as those for scientific simulation, and as a result our method does not need to rely on discrete differentiation schemes, and does not suffer from spikes or discontinuities which can cause anomalous behavior.  Due to this improved statistical convergence, guaranteed continuity, and direct calculation of derivatives, our method is demonstrated to significantly improve in accuracy of derivatives with more training data while others models do not.

\section{Preliminaries}

We consider as input a set of $n$ data points $X \subset \R^d$, and for each $x_i \in X$ a measured response value $y_i \in \R$.  We assume that $y_i = f(x_i) + \eps_i$ where $\eps_i$ is a small independent noise term and $f$ is a continuous function $f$.  In particular, we focus on data $X$ that may not be uniformly distributed over some domain, and functions $f$ which may have different properties (e.g., in terms of volatility or closeness to $0$) in different local regions populated by $X$.  And moreover, the variance of the error distribution governing each $\eps_i$ might be smaller where the function value $f(x_i)$ is closer to $0$.

We use the term \emph{continuous} to generically capture different degrees of continuity of a function $f$.  A $C^0(\R^d)$ continuous function ensures small changes in the argument in $\R^d$ result in proportionally small changes in $f$.   
However, functions can be \emph{$C^t(\R^d)$-continuous} which also ensures that the higher-order partial derivatives, up to the $t$th partial derivative in any direction is well-defined.  Our discussion and algorithms are applicable for $C^t$-continuous functions for any finite $t$, but our experimental evaluations will mostly focus on $C^1$-continuous functions where we can take derivatives.  

\paragraph*{Kernels}
We will use kernel methods on Euclidean data, and this relies on a radial kernel $K : \R^d \times \R^d \to \R$.  Most such kernel methods use the Gaussian $K(x,p) = \exp(-\|x-p\|^2/\sigma^2)$ or Laplace $K(x,p) = \exp(-\|x-p\| / \sigma)$ kernels; we employ Gaussians for our local models which are $C_t$-continuous everywhere for any value $t$.  

\paragraph*{Reproducing Kernels}
Since the idea of \KRR depends on reproducing kernel Hilbert space (RKHS), we give a brief summary here. Any positive semidefinite kernel function $K: \mathcal{X} \times \mathcal{X} \to \R$ can define a RKHS. For any $f$ in the Hilbert Space $\cal{H}$, we can represent it by its inner product $\langle \cdot , \cdot \rangle_{\cal{H}}$ in $\cal{H}$ with $K(x, \cdot)$, such that $\langle f, K(x, \cdot) \rangle_{\cal{H}} = f(x)$, $f \in \cal{H}$. We denote the norm of $f$ in $\cal{H}$, $\|f\|_{\cal{H}}$ based on inner product $\langle \cdot , \cdot \rangle_{\cal{H}}$ as $\sqrt{\langle f,f \rangle_{\cal{H}}}$.  We define $\|f\|_{2}$ as $(\int_{X} f^{2}(x) \mathrm{d} \mathbb{P}(x))^{1/2}$, where $\mathbb{P}$ is a (often implicitly uniform) distribution of $x$.  
Expectation $\E$ is taken over all $(x_i,y_i)$ pairs from $(X,Y)$ which are assumed drawn iid from $\mathbb{P}$, since the estimator $\hat f$ is trained from the data ${(x_i,y_i)}_{i=1}^{n}$. For example, if $\hat{f}(x) = \beta \cdot x$ then $\E\| \hat{f}(x) \|_{2}^{2} = \E(\beta^2 x^2 \mathrm{d} \mathbb{P}(x))$. 

\paragraph*{Wendland Kernels}
For the weighting of local regions, we use Wendland kernels~\cite{Dehnen_2012}. The Wendland kernels $\phi_{d,t}(r)$ constitute a two-parameter family of compactly-supported and (strictly) positive-definite radial kernels belonging to $C^{2t}(\R^d)$, and have widely been used for interpolation~\cite{Wendland:2004}. Note that these are only positive-definite up to and including $\R^d$, where $d$ is a fixed finite dimension; in contrast, more common kernels in machine learning like Gaussians are positive-definite for all dimensions, but cannot be compactly-supported. In fact, it is impossible to generate a radial kernel that is both positive-definite and compactly-supported for all dimensions~\cite{wu1995}.

A variety of Wendland kernels exists for any finite values $d,t > 0$; see~\cite{Wendland:2004,Fasshauer:2007} for examples. In this work, we restrict ourselves to the $C^2$ Wendland kernel in $\R^d$ given by $\phi_{d,1}(r)$, since our experimental evaluations mostly focus on $C^1$ functions. For instance, when $d=3$, this kernel, which is radially symmetric, is given by
\begin{align*} 
\phi_{3,1}(v) = (1-v/r)^4_{+} (1 + 4 v/r),
\end{align*}
where $v = \|x-p\|$ for $x,p \in \R^d$, where $r$ is the radius of a local region, and the term $(1- v/r)^4_{+} = (1- v/r)^4$ iff $(1- v/r)>0$, and is $0$ otherwise. As a natural consequence of this choice of $r$, when regions overlap, multiple Wendland kernels are non-zero over the overlapping volumes.

\paragraph*{Locally Adaptive Regression}
Parametric regression models, e.g., polynomials, treat all parts of the domain and all parts of the data equally. Local regressions, which learn testing patterns only based on its vicinity in training samples, can be dated back to \cite{Vapnik_1991, vapnik_local}. Such ideas in local learning have two main advantages over global methods. First, local learning can be computational efficient when dealing with large scale data \cite{JMLR:v16:zhang15d}. Second, local learning can easily adjust to the properties of training data in each sub regions of the input space \cite{vapnik_local}. Relevant algorithms and theoretical analysis in local regressions have been proposed in, e.g.: \cite{atkeson1997locally, loader2006local, JMLR:v16:zhang15d, Benefitofinterpolation}. Kernel methods (e.g., kernel ridge regression or Nadaraya-Watson kernel regression) typically enforce a fixed bandwidth globally, so while they can somewhat adapt to the local regions, enforce a global notion of scale.  
There exists variants of kernel methods which (mostly for scalability concerns) build local kernel regression models and then combine these together (while not guaranteeing continuity), and include local-svm~\cite{Ingo_Steinwart} and knn-svm~\cite{Hable_Robert}.  We will compare against these.  

Other approaches like decision trees are explicit in finding local regions for which simple models can be fit, and this is inherited in random forests -- although these do not attempt to guarantee continuity.  Neural networks can also represent local regions with different model properties -- although implicitly.  Using continuous and differentiable activation functions guarantees continuity, but typically the chaining of such functions leads to very high derivatives in regions if not carefully controlled.

\section{The PU-Stitched Regression Model}
\label{sec:algorithm}

We describe next how to build and then evaluate the newly proposed PU-Stitched Regression model.  
\subsection{Building Models on Local Regions}
Building our model involves two stages: (1) identifying the local regions, and (2) building a model on each local region.  

We model each local region as a Euclidean ball $B_j \subset \R^d$ with center $c_j$.  We choose the centers as a subset of the data points in a method that allows it to adapt to the data.  Given a center point $c_j \in X$, we set the radius of $B_j$ so that it contains $h$ points; we set $h = 100$ as default in our experiments. While $h=100$ worked consistently well in our experiments, see for instance the ablation study in Section \ref{sec:2DTests}, a user may need to tune this based on data distribution and need to local adaptivity.
We keep track of all points which are in no local regions, iteratively choose new points $c_{j+1} \in X$ (arbitrarily) among those in no regions, and create a new region $B_{j+1}$ around it covering more points, until all points are covered. We then also add one large region (infinite ball $B_0$) that contains any query.  

Next we build a local regression model $\hat{f}_j : B_j \to \R$ on the data in each region  $X_j = X \cap B_j$.  
As a baseline model for $\hat{f}_j$ we consider building a kernel ridge regression model using a Gaussian kernel.  That is let $K$ be the $h \times h$ matrix where entry $K_{s,t} = K(x_s,x_t)$ is the kernel similarity between a pair of points.  We build a model $\hat{f}_j(q) = \sum_{x_i \in X_j} \alpha_i K(x_i,q)$ as $\alpha = (K + \eta I)^{-1} y$ where $y$ is the vector of response terms from $X_j$ and $\eta > 0$ is a small ridge parameter. This optimizes the expression
\begin{align} \label{krr_eq}
    \min_{\alpha \in \R^n} \;\;\; (y - K\alpha)^{T}(y - K\alpha) + \eta \alpha^T K \alpha.  
\end{align}
By default, we can set the bandwidth $\sigma$ of the Gaussian kernel as the mean of all pairwise distances in $X_j$, and set the ridge term $\eta$ (at about $0.01\%$ of the average response value), since on a small local patch we should be able to fit data well, and this term serves mainly to well-condition the $K+\eta I$ matrix.  While there are many theoretical methods in the literature to determine bandwidth and ridge parameters ~\cite{Cucker_bestchoices, Caponnetto_optimalrates, NIPS2011_51ef186e}, it is common in practice to use cross-validation~\cite{Wahba1978/79, 10.1214/12-AOS1063, Gu_MA, ZHANG201595}.  In our experiments we do a small $5 \times 5$ grid search over $\eta$ and $\sigma$ on a held-out set.  

However, any local model could be built on a local region's data.  We find that a model that combines kernel and polynomial terms performs exceptionally well, and discuss it in more detail in Section \ref{sec:krr+poly}

\subsection{Partition of Unity Combination of Local Models}\label{sec:derivative}
To evaluate the PU-regression model at a query point $q \in \R^d$, we first need to determine all of the regions $B_j$ which contain $q$.  Let $J(q) = \{j_1, j_2, \ldots\}$ be the set of indices of regions which contain $q$.  
For each region $j \in J(q)$, we evaluate the Wendland kernel at $q$ and obtain a weight $w_j$ $= \phi_{d,t}(\|q-c_j\|)$.  
Recall that each query falls in the $B_0$ region, and in this region we set $w_0 = 1\text{e-}5$ as a small constant weight.  

We then employ the partition of unity (PU) approach to normalize these weights.  Set $W(q) = \sum_{j \in J(q)} w_j$ and then $w'_j = w_j/W(q)$; thus each $w'_j > 0$ and $\sum_{j \in J(q)}w'_j = 1$.  Since $w_0 > 0$, then $W(q) >0$ and we do not divide by 0.  
Now to evaluate the global regression function $\hat f$ at a point $q$ we compute the PU-weighted average of function values from each region $q$ falls in as  
\[
\hat f(q) = \sum_{j \in J(q)} w'_j \hat f_j(q).
\]


\paragraph*{Continuous, Differentiable}
To guarantee $C^t$-continuity of the regression function $\hat f$ we can leverage the PU-framework, as long as the local functions $\hat f_j$ and the choice of Wendland kernels also satisfy that property.  
Building on the PU-framework using Wendland kernels~\cite{Dehnen_2012}, to achieve global continuous and differentiable we only need to make sure the local models are continuous and differentiable. 

As a simple example of the usefulness of this, we can directly calculate all partial derivatives and the gradient of the modeled function $\hat f$.  
This works as long as the local model $\hat f_j(q)$ and the normalized Wendland kernel weight $w'(\|c_j-q\|)$ are at least $C^1(\R^d)$ with respect to $q \in \R^d$ (recalling that $c_j$ is the center of the model $\hat f_j$ and kernel weight function $w_j$). For any coordinate $q_i$ of $q$ the partial derivative of $\hat f$ is

\begin{align}\label{eq:derivatives}
\hat f'_i(q) = \frac{\mathrm{d}}{\mathrm{d} q_i} \hat f(q) = \sum_{j \in J_q} \frac{\mathrm{d} w'_j}{\mathrm{d} q_i} \hat f_j(q) + \frac{\mathrm{d} \hat f_j}{\mathrm{d} q_i} w'_j(\|c_j - q\|).  
\end{align}

Due to both the compact-support and smoothness of the $C^2(\R^d)$ Wendland kernels, when $q$ is on the boundary of its support, $w_j(q)=0$ \emph{and} all partial derivatives are $0$ at $q$ also. Consequently, the stitching does not introduce boundary effects at the support boundaries, and the gradient of $\hat f$ at $q$ is simply $[\hat f'_1, \ldots, \hat f'_d]^\top$.  On the other hand, if one were to instead use a truncated Gaussian, the derivatives measured from different support regions at $q$ would not match, since the truncated Gaussian is neither smooth nor exactly zero at its boundary.


Any local model which satisfies continuity and differentiability can be combined to a global model using the PU-method we proposed. In this paper, we investigate KRR and KRR-POLY, which is explained in the next section.

\subsection{Related work on Locally Adaptive and Continuous Regression}
The most common weighting strategy of smaller models is to simply take the uniform average of all predictions~\cite{JMLR:v16:zhang15d,pmlr-v80-xu18f}, but in these works those are not localized. 
Other weighting strategies depend on the nearest neighbor or decisions trees which are not $C^0$ continuous between local models.

While there are many approaches that give rise to either continuous or locally adaptive regression models, we know of only one that achieves both.  This is a recently introduced method dubbed localKRR~\cite{hanlocal}.  Similar to our work, it starts by building a set of local models; for each it uses a KRR model, but again could use any continuous regression model.  Then on a query $q$ it identifies the $h$ nearest points at the center of each model (for $h > d+1$), and combines them together in a weighted average.  The weighted average is devised as a function of the distance between $q$ and the center point of each region, and ensures $C^0$-continuity.  

Compared to our results, this localKRR has a few short-comings.  First, it requires each model formed at query $q$ is devised based on the $h$ closest models, so effectively requires the space to be covered everywhere by $h > d+1$ models, not by only 1 model as in ours.  Also each local model does not have a self-define region for which it is used, it depends on the distribution of other models; in particular it is determined by the $h$th order Voronoi diagram of model centers.  For our method, as models are formed they determine a fixed ball subset of $\R^d$ for which it is used; each and every query $q$ in that ball invokes that model.  
Second, localKRR can only guarantee $C^0$-continuity, where as our approach can attain $C^t$-continuity for any constant $t$.  Thus for localKRR, one cannot everywhere compute gradients on the learned regression function.  
Finally, as we will see, our approach PU-KRR-POLY outperforms localKRR empirically in each experiment.  

Another approach by Belkin \etal~
\cite{belkin2018overfitting} considers a variant of Nadaraya-Watson kernel regression (NWKR) that nearly achieves these locally adaptive and continuous properties.  It uses a singular kernel with NWKR which interpolates points (as localKRR nearly does), and achieves $C^0$ continuity, but in practices achieves local adaptivity by truncating the kernels, which actually destroys the continuity at these truncation thresholds.  


\section{KRR-POLY}
\label{sec:krr+poly}
We borrow the idea of RBF interpolation augmented with polynomials (see~\cite{Fasshauer:2007} for an example) to propose a new variant of kernel ridge regression: KRR-POLY. We also provide new statistical analysis of this model under the noisy (non-interpolation) setting. Consider again eq(\ref{krr_eq}). Instead of estimating $f$ solely with a kernel expansion, we now also augment the kernel expansion with polynomials of total degree $\ell$ in $\R^d$. Letting $p_i:\R^d \to \R, i=1,\ldots, {\ell + d \choose d}$ be a basis for this space of polynomials, we now build a model $\hat{f}_j(q) = \sum_{x_i \in X_j} \alpha_i K(x_i,q) + \sum\limits_{i=1}^{\ell + d \choose d} \lambda_i p_i(q)$. To find the coefficients $\alpha_i$ and $\lambda_i$, we solve the following minimization problem:
\begin{align} 
    &\min_{\alpha \in \R^n}  \;\;\; (y - K\alpha - P\lambda)^{T}(y - K\alpha - P\lambda) + \eta \alpha^T K \alpha, \nonumber\\
    &\textit{s.t.} \; P^T \alpha =0 \label{poly-krr}
\end{align}
where $P_{ij} = p_j(x_i)$, $\lambda$ is the unknown vector of polynomial coefficients, but also a Lagrange multiplier that enforces the constraint $P^T \alpha = 0$. In this work, we choose $p_j$ to be the set of $d$-variate monomials up to degree $\ell$; we find $\ell=2$ is sufficient to induce significant advantage over regular KRR models. This additional constraint forces the kernel expansion $\sum_{x_i \in X_j} \alpha_i K(x_i,q)$ to be orthogonal to the polynomial terms, thereby ensuring that the overall approximant reproduces polynomials up to degree $\ell$~\cite{BAYONA20192337}. In addition, this constraint regularizes the far field of the RBF expansion~\cite{fornberg2002}. The above constraints can be collected into the following block linear system:
\begin{align*} 
    \begin{bmatrix}
    K + \eta I_{n} & P \\
    P^T & \mathbf{0}
    \end{bmatrix} \begin{bmatrix}
    \alpha \\
    \lambda 
    \end{bmatrix} = \begin{bmatrix}
    y \\ 
    \mathbf{0}
    \end{bmatrix}.
\end{align*}
This linear system has a unique solution provided the data locations are distinct, and if $P$ is of full-rank (see~\cite{Fasshauer:2007} for proof). However, if the data locations lie on a locally algebraic submanifold of $\R^d$, $P$ is likely to be rank-deficient. Thus, to ensure the generality of our technique, we solve the above linear system using the singular value decomposition (SVD) with thresholding of the singular values; we set the threshold at $10^{-10}$. This is equivalent to enforcing the constraint $P^T \alpha = 0$ in a least-squares sense. In the deterministic interpolation setting, the above approach leads to a convergence rate controlled by the polynomial terms~\cite{DavydovSchabackMinimal}.  Below, we discuss how to generalize these results to the statistical setting. It is important to note in the discussion above that the overall approximant stays the same regardless of the choice of the polynomial basis. In our proof below, for instance, we find it more convenient to use an orthonormal polynomial basis.

\paragraph*{Statistical Convergence}
We next show a statistical convergence rate for this new KRR-POLY model.  We do so in a similar form as \cite{JMLR:v16:zhang15d}'s bound for kernel regression.  For KRR, one can show that $\E \|\hat f - f\|_2^2 = O((\eta + \frac{1}{\eta n}) \cdot \|f\|_{\cal{H}}^2)$. 
In contrast we can show that this can be improved for KRR-POLY to 
$\E \|\hat f - f\|_2^2 = O((\eta + \frac{1}{\eta n}) n^{-(\ell+1)/d} \cdot \|f\|_{\cal{H}}^2)$, under the mild assumptions listed below.  

Define $g(x) = f(x) -  \sum_{k = 1}^{s} p_k(x) {\lambda}^{*}_{k}$, the projection of the generating function $f(x)$ on to the space orthogonal to the polynomial basis ${p_1(x),...,p_s(x)}$.
Given this fixed polynomial basis, the statistical convergence then only needs to learn the residual (captured in $g$) via kernel ridge regression. As \cite{BAYONA20192337} implies the $\|g\|_{\cal{H}}^{2} \le n^{-(\ell+1)/d}\|f\|_{\cal{H}}^{2}$, so our task is reduced.  
We can then apply existing bounds, and in this paper we choose to employ that of \cite{JMLR:v16:zhang15d}.  

Combining these insights together yields the following simplified form of our statistical convergence rate, after stating two assumptions.  

\begin{assumption}\label{conv_assump_1}
The unknown data generating function $f \in \cal{H}$, where $Y = f(X) + \eps$, for all $x_i \in \mathcal{X}$, we have $\E[(y_i - f(x_i))^2 \mid x_i ] \le \delta^2$ for some $\delta > 0$.  
\end{assumption}
\begin{assumption}\label{conv_assump_2}
For some $k \ge 2$, there is a constant $\rho < \infty$, s.t. $\E p_j^{2k}(X) \le \rho^{2k}$ for all $j \in \N$, where $\rho$ is a uniform upper bound for the moment.
\end{assumption}

This condition, in Assumption \ref{conv_assump_2}, regulates the tail behavior of the polynomial basis in the RKHS. The number of moments, $k$, depends on the choice of kernel $K$; for Gaussians it holds for $k=2$.

\begin{theorem}\label{conv_theorem}
Under Assumption \ref{conv_assump_1} and \ref{conv_assump_2}, for $f \in \cal{H}$, estimator $\hat{f}$ of KRR-POLY on $n$ training data points in $\R^d$, ridge parameter $\eta$, and polynomials of degree $\ell$ has mean square error: 
\begin{align*}
    \E \| \hat{f} - f \|_{2}^{2} = O((\eta +r(\eta)/n)n^{-(\ell+1)/d}\cdot \| f \|_{\cal{H}}^{2}). 
\end{align*}  
\end{theorem}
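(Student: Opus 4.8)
The plan is to prove the bound by a \emph{reduction}: I would show that KRR-POLY applied to $f$ behaves, up to lower-order terms, like plain KRR applied to the polynomial residual of $f$, and then invoke the existing KRR rate of~\cite{JMLR:v16:zhang15d} on that residual with its reduced RKHS norm. First I would decompose the target as $f = f_{\mathrm{poly}} + g$, where $f_{\mathrm{poly}} = \sum_{k=1}^{s} \lambda^*_k p_k$ with $s = \binom{\ell+d}{d}$ is the best degree-$\ell$ polynomial component and $g$ is the orthogonal residual defined in the statement. The key structural fact I would exploit is that the constraint $P^\top \alpha = 0$ in~\eqref{poly-krr} forces the kernel expansion to be orthogonal to the polynomial space, so the block system cleanly separates the job of \emph{reproducing} the polynomial part (handled by the $\lambda$ block) from the job of \emph{fitting} the residual $g$ (handled by the $\alpha$ block). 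Consequently $\E\|\hat f - f\|_2^2$ is, up to the polynomial-coefficient estimation error, exactly the KRR error for the target $g$.

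With the reduction in place, two quantitative ingredients finish the proof. The first is the residual-norm bound $\|g\|_{\mathcal{H}}^2 \le n^{-(\ell+1)/d}\|f\|_{\mathcal{H}}^2$, which I would import from~\cite{BAYONA20192337}: polynomial reproduction of degree $\ell$ annihilates error up to order $h^{\ell+1}$ in the fill distance $h$, and for $n$ quasi-uniform points in $\R^d$ one has $h \sim n^{-1/d}$, so the residual carries only an $n^{-(\ell+1)/d}$ fraction of the norm of $f$. The second is the plain-KRR bound of~\cite{JMLR:v16:zhang15d}, which via a bias--variance split gives $\E\|\hat g - g\|_2^2 = O\big((\eta + r(\eta)/n)\,\|g\|_{\mathcal{H}}^2\big)$, where the bias $O(\eta\|g\|_{\mathcal{H}}^2)$ comes from ridge regularization and the variance $O(r(\eta)\,\delta^2/n)$ is controlled by the effective dimension $r(\eta)$ together with the noise bound $\delta^2$ from Assumption~\ref{conv_assump_1}. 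Substituting the residual-norm bound into this KRR rate immediately yields the claimed product form $O\big((\eta + r(\eta)/n)\,n^{-(\ell+1)/d}\,\|f\|_{\mathcal{H}}^2\big)$.

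The step I expect to be the main obstacle is justifying the reduction rigorously, i.e. showing that the polynomial block does not inject error of larger order than the residual KRR error. Because $\lambda$ is estimated from noisy data and is coupled to $\alpha$ through the off-diagonal blocks $P, P^\top$, I would need to bound the variance of the fitted polynomial coefficients; this is precisely where Assumption~\ref{conv_assump_2} enters, since the uniform moment bound $\E p_j^{2k}(X)\le \rho^{2k}$ controls the tails of the polynomial design and guarantees $P^\top P/n$ concentrates, so that the $\lambda$-estimate is well-conditioned and its contribution is absorbed into the stated constants. A secondary subtlety is that for a Gaussian RKHS the polynomials need not lie in $\mathcal{H}$, so ``orthogonal projection'' must be read in the least-squares sense enforced by the SVD-thresholded solve rather than as an $\mathcal{H}$-inner-product projection; I would verify that $g$ is nonetheless a legitimate RKHS target to which~\cite{JMLR:v16:zhang15d} applies, and that the deterministic reproduction estimate of~\cite{BAYONA20192337} transfers to a bound on $\|g\|_{\mathcal{H}}$ under the quasi-uniformity needed for $h \sim n^{-1/d}$.
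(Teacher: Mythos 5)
Your proposal matches the paper's proof essentially step for step: the paper also decomposes $f$ into its degree-$\ell$ polynomial part plus the residual $g$, treats the kernel block as plain KRR on the shifted data $z_i = y_i - \sum_j \lambda^*_j p_j(x_i)$ so that Lemma~7 of~\cite{JMLR:v16:zhang15d} applies with $\|g\|_{\cal H}^2$ in place of $\|f\|_{\cal H}^2$, imports $\|g\|_{\cal H}^2 \le n^{-(\ell+1)/d}\|f\|_{\cal H}^2$ from~\cite{BAYONA20192337}, and bounds the polynomial-coefficient error separately (the paper does this by citing $|\hat\lambda_k-\lambda^*_k|=o(h^{\ell+1})$ rather than the concentration argument you sketch, but that is the same term in the same decomposition). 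The subtleties you flag at the end --- the quasi-uniformity needed for $h\sim n^{-1/d}$ and whether the deterministic reproduction estimate transfers to the statistical setting --- are real and are exactly the points the paper passes over without detailed justification.
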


Theorem \ref{conv_theorem} is a consequence of the below lemma with a more detailed upper bound, and the simplifications described in the remark that follows.   
However, that more precise bound is nuanced and technical, and we need some additional notation.  Let $1/u_j=\|p_j(x)\|_{H}^{2}$ be the norm of each polynomial basis. Let $r(\eta) = \sum_{j = 1}^{\infty} \frac{1}{1 + \frac{\eta}{u_j}}$ be the effective dimensionality ~\cite{effective_dimension_tong} of the kernel. Let $u_{\infty} = \sum_{j = 1}^{\infty} u_j$ be the kernel trace, which is assumed to be finite and provides a rough estimate of the size of the kernel. Then $\beta_{d} = \sum_{j = d + 1}^{\infty} u_j$ describes the decay of the tail of the eigenvalues of $K$. The quantity $b(n,s,k) = \max \{ \sqrt{\max(k, \log s)}, \frac{\max(k,\log s)}{n^{\frac{1}{2} - \frac{1}{k}}}\}$ is a function of the number of moments $k$, where $s$ is the number of the of polynomial basis $\{p_i(x)\}$ for the multivariate polynomial space $\Pi_{\ell}^{d}$.

\begin{lemma}\label{conv_lemma}
Under assumption \ref{conv_assump_1} and \ref{conv_assump_2}, for $f \in \cal{H}$, estimator $\hat{f}$ of KRR-POLY has mean square error bound as: 
\begin{align*}
    \E \| \hat{f} - f \|_{2}^{2} & \le 12 \eta \| g\|_{\cal{H}}^{2} + \frac{12 \delta^{2}}{n} r(\eta)+s\cdot E\|p_k\|_2^2\cdot o(h^{2(l+1)}) 
    \\ & + \left(\frac{2 \delta^2}{\eta} + 4 \| g \|_{\cal{H}}^{2}\right) \cdot \left(u_{s + 1} + \frac{12 \rho^{4} u_{\infty} \beta_s}{\eta} + (C_1 \cdot b(n,s,k) \frac{\rho^2 r(\eta)}{\sqrt{n}}) \| g \|_2^{2}\right).  
\end{align*}  
\end{lemma}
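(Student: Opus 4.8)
The plan is to reduce the analysis of KRR-POLY to that of plain kernel ridge regression applied to the polynomial-orthogonal residual, and then import the bias--variance decomposition of \cite{JMLR:v16:zhang15d} with $f$ replaced by this smaller-norm residual. First I would use the orthogonality constraint $P^\top \alpha = 0$ to decompose the target. Writing $\Pi_\ell f = \sum_{k=1}^{s} p_k \lambda^*_k$ for the best degree-$\ell$ polynomial fit and $g = f - \Pi_\ell f$ for the orthogonal residual, the constraint forces the kernel expansion $\sum_i \alpha_i K(x_i,\cdot)$ to live in the orthogonal complement of the polynomial space. Consequently the estimator splits, up to sampling error, into a polynomial piece that reproduces $\Pi_\ell f$ and a kernel piece that performs KRR on the residual $g$. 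The key quantitative input is the bound of \cite{BAYONA20192337}, $\|g\|_{\cal{H}}^2 \le n^{-(\ell+1)/d}\|f\|_{\cal{H}}^2$, which is what ultimately buys the improved $n^{-(\ell+1)/d}$ factor in Theorem \ref{conv_theorem}.

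Next I would account separately for the three sources of error. First, the polynomial fit is not exact on finitely many noisy samples: on a local patch of scale $h$, each monomial is reproduced only up to an $o(h^{\ell+1})$ pointwise error, so summing over the $s$ basis elements and squaring contributes the $s \cdot \E\|p_k\|_2^2 \cdot o(h^{2(\ell+1)})$ term. Second, the KRR bias from ridge regularization applied to $g$ contributes $12\eta\|g\|_{\cal{H}}^2$, exactly the \cite{JMLR:v16:zhang15d} bias term but with $\|g\|_{\cal{H}}^2$ in place of $\|f\|_{\cal{H}}^2$. Third, the KRR variance from the noise, controlled through the effective dimensionality, contributes $\tfrac{12\delta^2}{n}r(\eta)$, using Assumption \ref{conv_assump_1} to bound the per-sample conditional variance by $\delta^2$.

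The technically heavy step is the final product term, which collects the higher-order corrections from approximating the population kernel operator by its empirical counterpart. Here I would truncate the RKHS eigenbasis after $s$ coordinates and control: the truncation bias through the tail eigenvalue sum $\beta_s$ and trace $u_\infty$, giving the $\tfrac{12\rho^4 u_\infty \beta_s}{\eta}$ contribution; the $(s+1)$-th eigenvalue leakage $u_{s+1}$; and the empirical-versus-population operator deviation, which by a moment/concentration argument of the type in \cite{JMLR:v16:zhang15d} scales like $b(n,s,k)\tfrac{\rho^2 r(\eta)}{\sqrt n}$ (carried against the $\|g\|_2^2$ signal energy). Assumption \ref{conv_assump_2} supplies the uniform $2k$-th moment bound $\rho^{2k}$ that makes this concentration go through. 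All three are multiplied by the prefactor $\bigl(\tfrac{2\delta^2}{\eta}+4\|g\|_{\cal{H}}^2\bigr)$, which is an a priori bound on the squared RKHS norm of the fitted estimator: the $\delta^2/\eta$ piece comes from the noise energy inflated by the inverse ridge, and the $4\|g\|_{\cal{H}}^2$ piece from the signal.

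The main obstacle I anticipate is this last step: ensuring that the residual $g$, rather than $f$, legitimately inherits all the moment and norm bounds needed for the operator concentration, and that the orthogonality constraint $P^\top\alpha = 0$ (enforced only in a least-squares / SVD-thresholded sense in practice) does not corrupt the clean split between the polynomial and kernel pieces. Tracking the polynomial projection through the empirical covariance concentration, while keeping the $b(n,s,k)$ moment-order tradeoff sharp, is where the delicate bookkeeping lies; the preceding steps are essentially a careful re-derivation of the \cite{JMLR:v16:zhang15d} bound under the substitution $f \mapsto g$.
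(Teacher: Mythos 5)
Your proposal follows essentially the same route as the paper's proof: decompose $\hat f - f$ into a kernel part that solves plain KRR on the residual data $z_i = y_i - \sum_j \lambda^*_j p_j(x_i)$ generated by $g = f - \sum_k \lambda^*_k p_k$ plus a polynomial-coefficient error term, apply Lemma~7 of \cite{JMLR:v16:zhang15d} to the former (which is exactly where the product term with $u_{s+1}$, $\beta_s$, and $b(n,s,k)$ comes from --- the paper imports it as a black box rather than re-deriving the operator concentration), and bound the latter by $s\cdot \E\|p_k\|_2^2\cdot o(h^{2(\ell+1)})$ via the $|\hat\lambda_k - \lambda^*_k| = o(h^{\ell+1})$ estimate of \cite{BAYONA20192337}. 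The only cosmetic difference is that the paper pins down $\lambda^*_k$ as Taylor coefficients of $f$ at $0$ rather than as an $L_2$ projection, and splits the two squared norms by a direct triangle-inequality step rather than the careful cross-term bookkeeping you anticipate.
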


\begin{proof}
For the minimization problem (\ref{poly-krr}), the solution is 
\[
\hat{f}=\sum_{i=1}^{n}\hat{c}_iK(x, x_i)+\sum_{k = 1}^{s} \hat{\lambda}_{k}p_k(x),
\]
with kernel coefficients $\hat{c}=(\hat{c}_1,\ldots,\hat{c}_n)=(K+\eta I)^{-1}(Y-P\hat{\lambda})$ and polynomial coefficients $\hat{\lambda}=(\hat{\lambda}_1,\ldots,\hat{\lambda}_s)=(P^T(K+\eta I)^{-1}P)^{-1}P^T(K+\eta I)^{-1}Y$. Then $$\E \| \hat{f} - f \|_{2}^{2}  = \E \|\sum_{i=1}^{n}\hat{c}_iK(x, x_i) - g(x)+\sum_{k=1}^{s} p_k(\hat{\lambda}_k-\lambda^*_k)\|_{2}^{2},$$ where $g(x) = f(x) -  \sum_{k = 1}^{s} p_k(x) \lambda^{*}_{k}$. Write the Taylor expansion of $f(x)$ at $0$ as $f(x) = \sum_{k = 1}^{\infty} L_{k}(f(0))p_{k}(x)$, we have $\lambda^*_k=L_{k}(f(0))$ and $g(x) = \sum_{k = s + 1}^{\infty} L_{k}(f(0)) p_{k}(x)$, where $L_{k}$ is the differential operator. Then we have $$\E \| \hat{f} - f \|_{2}^{2}\le \E \| \sum_{i=1}^{n}\hat{c}_iK(x, x_i) - g(x) \|_{2}^{2}+\E \| \sum_{k=1}^{s} p_k(\hat{\lambda}_k-\lambda^*_k) \|_{2}^{2}.$$

The  term $\sum_{i=1}^{n}\hat{c}_iK(x, x_i)$ can be treated as a solution to the KRR problem with data $\{(x_i,z_i=y_i-\sum_{j=1}^{s}\lambda^*_jp_j(x_i))\}_{i=1}^{n}$ from the model $Z=Y-\sum_{k = 1}^{s} p_k(X) \lambda^{*}_{k}=f(X)+\varepsilon-\sum_{k = 1}^{s} p_k(X) \lambda^{*}_{k}=g(X)+\varepsilon$. By Assumption \ref{conv_assump_1}, we have $\E[(z_i - g(x_i))^2 \mid x_i ]= \E[(y_i - f(x_i))^2 \mid x_i ]\le \delta^2$ for some $\delta > 0$. Together with Assumption \ref{conv_assump_2}, by  Lemma 7 in ~\cite{JMLR:v16:zhang15d}, we have the KRR solution $\sum_{i=1}^{n}\hat{c}_iK(x, x_i)$ satisfies 
\begin{align*}
\E \| \sum_{i=1}^{n}\hat{c}_iK(x, x_i) - g(x) \|_{2}^{2} 
  & \le 
     12 \eta \| g\|_{\cal{H}}^{2} + \frac{12 \delta^{2}}{n} r(\eta) 
  \\ & 
    + \hspace{-1mm}\left(\frac{2 \delta^2}{\eta} + \hspace{-1mm}4 \| g \|_{\cal{H}}^{2}\right) \hspace{-1mm}\cdot\hspace{-1mm} \left(u_{s + 1} + \hspace{-1mm}\frac{12 \rho^{4} u_{\infty} \beta_s}{\eta} + \hspace{-1mm}(C_1 \hspace{-1mm}\cdot b(n,s,k) \frac{\rho^2 r(\eta)}{\sqrt{n}}) \| g \|_2^{2}\right)\hspace{-1mm}.
\end{align*}

From the paper \cite{BAYONA20192337}, we know that $|\hat{\lambda}_k-\lambda^*_k|=o(h^{l+1})$. So 
$$\E \| \sum_{k=1}^{s} p_k(\hat{\lambda}_k-\lambda^*_k) \|_{2}^{2}\le s\cdot E\|p_k\|_2^2\cdot o(h^{2(l+1)}).$$

So the desired result of Lemma \ref{conv_lemma} holds.
\end{proof}

\paragraph*{Remark}
The term $u_{s+1}$ and $\beta_s$ are decreasing functions of $d$. The term $b(n,s,k)$ is increasing in $s$. By carefully picking $s$, the first two terms in the theorem are dominant. Then the theorem indicates that the mean squared error of our estimator, $\E \| \hat{f} - f \|_{2}^{2}$, is upper bounded by a function of $\eta$, $n$ and $s$. In fact, it indicates that $\E \| \hat{f} - f \|_{2}^{2}=O((\eta +r(\eta)/n)\cdot \| g \|_{\cal{H}}^{2})$. This is a common bias-variance trade-off inequality in non-parametric regression problems, where the first term $(\eta)$ is increasing in $\eta$ and the second term $r(\eta)/n$ is deceasing in $\eta$. Note that the usual KRR estimator has the bound $O((\eta +r(\eta)/n)\cdot \| f \|_{\cal{H}}^{2})$, and $r(\eta)$ can be bounded by $1/\eta$.
Thus Theorem \ref{conv_theorem} follows.  



\section{Experiments}

We compare PU-KRR and PU-KRR-POLY with global KRR and other stitched kernel regression methods -- specifically, KNN SVM~\cite{Hable_Robert}, local SVM~\cite{Ingo_Steinwart}, and local KRR~\cite{hanlocal} -- as well as neural nets and random forests.  In some cases, for completion, we also ran PU-POLY which uses PU-stitched regression with a degree-2 polynomial model used locally.  
We measure on a 2D synthetic dataset, a 3D data set from the solution of a PDE on the sphere~\cite{SHANKAR2018170}, and several real-world datasets.  The results from some generic and higher-dimensional datasets from the UCI repo~\cite{Dua:2019} are deferred to the Appendix. 
The error metrics we employ to compare experimental results include: $\rmse$ and relative error, which is $\text{rel-err}_i = \frac{\|y_i - \hat f(x_i)\|}{\|y_i\|}$. 
For all experiments, we do train/validation/test split, and for all the kernel related methods we tune the bandwidth $\sigma$ and ridge parameter $\eta$ by grid search. We select $\eta$ from $\{1\text{e-}1,1\text{e-}2,1\text{e-}3,1\text{e-}4,1\text{e-}5\}$ and $\sigma$ from $\{ 0.25 * b, 0.5 * b, b, 2b, 5b \}$ where $b$ is the average pairwise distance in each local/global model(s). For the tree method XGBoost~\cite{DBLP:journals/corr/ChenG16}, we also tune hyper-parameters e.g: number of subtrees, learning rate, etc. by grid search. Finally for the neural network, we have tried 2-5 hidden layers with different number of neurons varied in $\{8,16,32,64,128,256\}$, and report the best results found.  All the experiment results are reported based on a held-out test set. 

\subsection{2D Tests} 
\label{sec:2DTests}
To demonstrate that our method can adapt to different scales of response values in different regions, we design a 2D dataset where the underlying function values $y$ are generated by $x_1, x_2 \in [-6,30] \times [-6,30]$, $z_1 = \frac{1}{1 + \exp(-x_1)} \cdot (1 + \frac{9}{1 + \exp(12 - x_1)}) \cdot (1 + \frac{10}{1 + \exp(24 - x_1)})$,  $z_2 = \sin(x_2) + \cos(x_1)$, $y = z_1 \cdot z_2$.  As seen in the "True y" plot in Figure \ref{fig:2d}, the function values are much closer to zero when $x_1$ is small $x_1 \in [-6, 6]$, moderate variance in $x_1 \in [6,18]$, and large variation in $x_1 \in [18,30]$.  
We randomly selected 20000 points from $[-6,30] \times [-6,30]$ as training dataset, then choosing test set based a fine grid with a data point $(x_1, x_2) \in [-6,30] \times [-6,30]$ every $0.2$. We tune the parameters based on the performance in training set and report the test set errors in Figure \ref{fig:2d} and Table \ref{tbl:2d}. Note how by design the response value $y$ becomes larger on the right where additive error tends to be larger, and smaller on the left where relative error tends to be larger.   

PU-KRR-POLY has the least RMSE (by almost an order of magnitude) and the best or near-best max and mean relative error.  The other KRR-based models all achieve similar RMSE (between $0.25$ and $0.35$) and also similar relative error.  However, Figure \ref{fig:2d} shows several artifacts. For instance, global KRR exhibits a banding effect in the error, which can be seen at a much smaller scale in the other KRR models; this is an example of a boundary or far-field effect~\cite{fornberg2002} that adding the polynomial terms is designed to remove.  Also, a visible discontinuity is apparent in Local SVM.  
The XGBoost and Neural Network models 
do not at all perform well on this data set as apparent visually and quantitatively.  Especially for the neural network, since the input features is with only 2 dimension, we use a 2-layer MLP (multiple layer perceptron) with sigmoid activation function (it performed worse with 3 or 4 layers); the network structure is tuned by AX package using Monte Carlo Bayesian Optimization \cite{balandat2020botorch}.

\begin{figure}[hbtp!]
    \centering
    \includegraphics[width=0.9\linewidth]{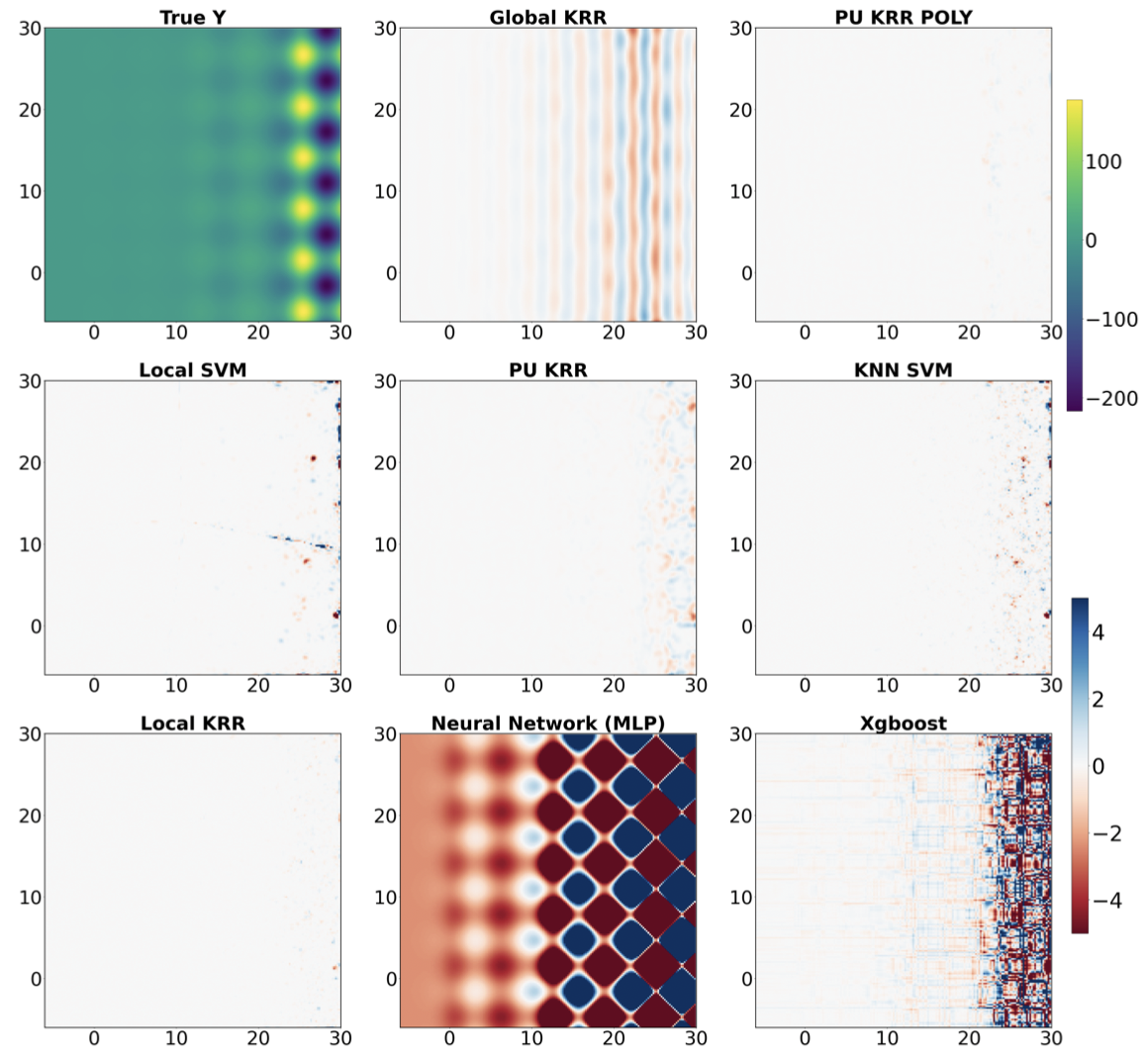}
    \caption{Error plots for 2D simulation result on data in \textbf{\sffamily True y} plot; summarized in Table \ref{tbl:2d}.}
    \label{fig:2d}
\end{figure}

\begin{table}[hbtp!]
\centering
\begin{tabular}{rccc}
\hline
     &  \rmse & max relative error & mean relative error \\ \hline
PU-KRR-POLY &  0.041  & 50.392  & 0.021  \\ 
        PU-KRR & 0.249 & 49.928 & 0.026  \\ 
    PU-POLY & 0.239 & 46.052 & 0.037 \\
KNN SVM & 0.337  & 47.284 & 0.027 \\
Local SVM & 0.321  & 44.421 & 0.026 \\ 
Local \krr &  0.089 & 53.420  & 0.023 \\ 
Global \krr & 0.354  & 327.301 & 0.531 \\
XGBoost & 2.729  & 1170.34  & 3.359 \\
Neural Network & 7.525 & 3042.82 & 14.185 \\
\hline
\end{tabular}
   \caption{RMSE $\&$ Relative errors comparison for 2D simulation}
   \label{tbl:2d}
\end{table}

\paragraph*{Ablation Study}
To verify the choice of parameters and design decisions, we perform an ablation study.  We focus on the simulated 2D dataset discussed in Table \ref{tbl:2d} and Figure \ref{fig:2d}.  

First, we try an additional regression model, and show the results in Table \ref{tbl:2d}.  PU-POLY uses PU-stitched regression, but with only a degree-2 polynomial model as the local model.  It performs similar, but a bit worse than PU-KRR and worse in RMSE and mean-relative-error than PU-KRR-POLY.

Second, recall we selected the centers of the regions arbitrarily, so long as they were not yet covered.  We implement this by just scanning all data points in the order they are stored until we find one not yet covered or all are covered.  To assess the stability of this process, we randomized the order of the points 3 times, and re-reran PU-KRR-POLY on the new sets of regions.  The average RMSE was $0.045$ with standard deviation $0.005$.  So about the same as the intial run.  The average (std.dev) for max relative error $41.36(15.82)$ and mean relative error $0.025(0.0002)$; also similar to the results for the input order PU-KRR-POLY.  Note the high standard deviation of $15.82$, which indicates while the maximum relative error is a useful goal, it is not a very stable measure compared to RMSE and mean relative error.

Third, we experimented with the value $h$ describing the number of points to include in a region.  We compare to a default of $h=100$, and tried $h = \{50, 75, 125, 150\}$.  We see in Table \ref{tbl:h} that there is not much change, but that, relative to $h=100$, as $h$ increases the RMSE increases, and as $h$ decreases mean relative error also increases. Hence $h=100$ appears a good choice, and varying this within $50\%$ should not change things too much, although increasing it can cause error to increase since the local models may not fit the data as well anymore.

\begin{table}[hbtp!]
\centering
\begin{tabular}{rcccc}
\hline
     & $h$ & \rmse & max relative error & mean relative error \\ \hline
PU-KRR-POLY & 100 &  0.041  & 50.392  & 0.021  \\ \hline

PU-KRR-POLY & 50 & 0.038 & 40.371 & 0.037 \\
PU-KRR-POLY & 75 & 0.043 & 94.477 & 0.029 \\
PU-KRR-POLY & 125 & 0.067 & 95.709 & 0.055 \\
PU-KRR-POLY & 150 & 0.110 & 65.705 & 0.064 \\
\hline
\end{tabular}
   \caption{Ablation study for PU-KRR-POLY on $h$, the number of points in a local region.}
   \label{tbl:h}
\end{table}

Fourth, we check how the algorithm varies with the choice of kernel used to provide weights to guide how local regions are stitched together.  In our experiments we use the Wendland $\phi_{2,1}$ kernel, which provides $C^2$-continuity.  For the algorithm to build local models, we require kernels with bounded support.  In Table \ref{tbl:Wendland} we also consider the Gaussian (truncated), which does not ensure continuity, and the Wendland $\phi_{2,0}$ kernel which only provides $C^0$-continuity.  We see in Table \ref{tbl:Wendland} that this change results in a very small effect in the error measures, so it is worth using the Wendland $\phi_{2,1}$ kernel which has stronger gaurantees.

\begin{table}[hbtp!]
\centering
\begin{tabular}{rcccc}
\hline
     & stitching kernel & \rmse & max relative error & mean relative error \\ \hline
PU-KRR-POLY & Wendland $\phi_{2,1}$ &  0.041  & 50.392  & 0.021  \\ \hline
PU-KRR-POLY & Wendland $\phi_{2,0}$ &  0.048 & 30.590 & 0.024 \\
PU-KRR-POLY & trunc-Gaussian & 0.041 & 51.691 & 0.024 \\
\hline
\end{tabular}
   \caption{Ablation study for stitching kernel in PU-KRR-POLY.  Default is Wendland$_{3,1}$.}
   \label{tbl:Wendland}
\end{table}

\subsection{Simulation data on the sphere $\mathbb{S}^2$}
In this experiment, we generate the training data by numerically solving the spherical advection equation $\frac{\partial q}{\partial t} + {\bf u}\cdot\nabla_{\mathbb{S}^2} = 0$ in 3D Cartesian coordinates using a fourth-order accurate semi-Lagrangian local RBF method~\cite{SHANKAR2018170}, where $q(x,t)$ is a scalar-valued function; here, $\nabla_{\mathbb{S}^2} = (I - xx^T)\nabla$ is the \emph{surface} gradient. The initial condition to this problem is a pair of $C^1(\mathbb{S}^2)$ cosine bells given by $q(x,0) = 0.1 + 0.9(q_1(x,0) + q_2(x,0))$, where for $j=1,2$
\begin{align*}
q_j(x,0) &=
\begin{cases}
\frac{1}{2}\left(1 + \cos\left(2\pi \cos^{-1}\left(x \cdot p_j\right) \right) \right) & \text{if}\ \cos^{-1}\left(x \cdot p_j\right) < \frac{1}{2}, \\
0 & \text{otherwise}.
\end{cases}
\end{align*}
The components of the velocity field ${\bf u} = (u,v)$ for this test are given in Eqns. 16 and 17 in~\cite{SHANKAR2018170}. This flow field is designed to deform the initial condition and reverse it back at time $t=5$. The RBF-FD solution $q$ was computed at ~92k equal-area icosahedral Cartesian points on the sphere. To illustrate the ability of our method to handle concentrated features with values close to 0, we shift the solution to be $\hat{q} = \max(q) - q$. The resulting function values are shown in Figure \ref{fig:3d_ball}. We then sampled from the original set based on Bernoulli trials with the probability proportional to the inverse of distance to the centers of two bump regions; the details of this sampling strategy are described in Appendix ~\ref{appendix:sampling}. 

We plot the average of five trials of \rmse, mean relative error, max relative error on a fixed number of random sampled test observations (20,000), versus different training sample sizes $\{3065, 5377, 11504, 21545\}$ for different methods in Figure~\ref{fig:error_sample}. For the global KRR, we were only able to use training sizes less than 15000 due to memory limitations. At every training size, PU-KRR-POLY provides the best result, with PU-KRR typically second or near-second best, especially on larger training sets.  KNN SVM and local SVM are similar to PU-KRR, but plateau in relative error for larger training set size, likely because of lack of continuity.

\begin{figure}[hbtp!]
    \centering
    \includegraphics[width=0.41\linewidth]{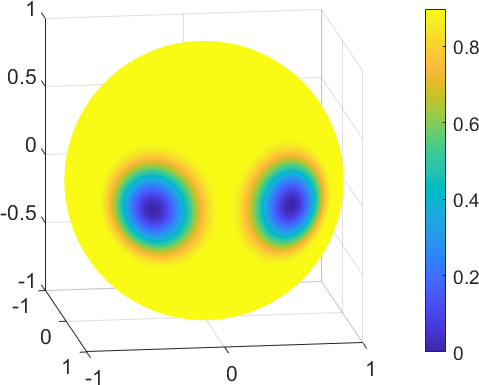}
    \caption{\label{fig:3d_ball}Function values on $\mathbb{S}^2$} 
\end{figure}
\begin{figure}[hbtp!]
    \centering
    \includegraphics[width=1.1\linewidth]{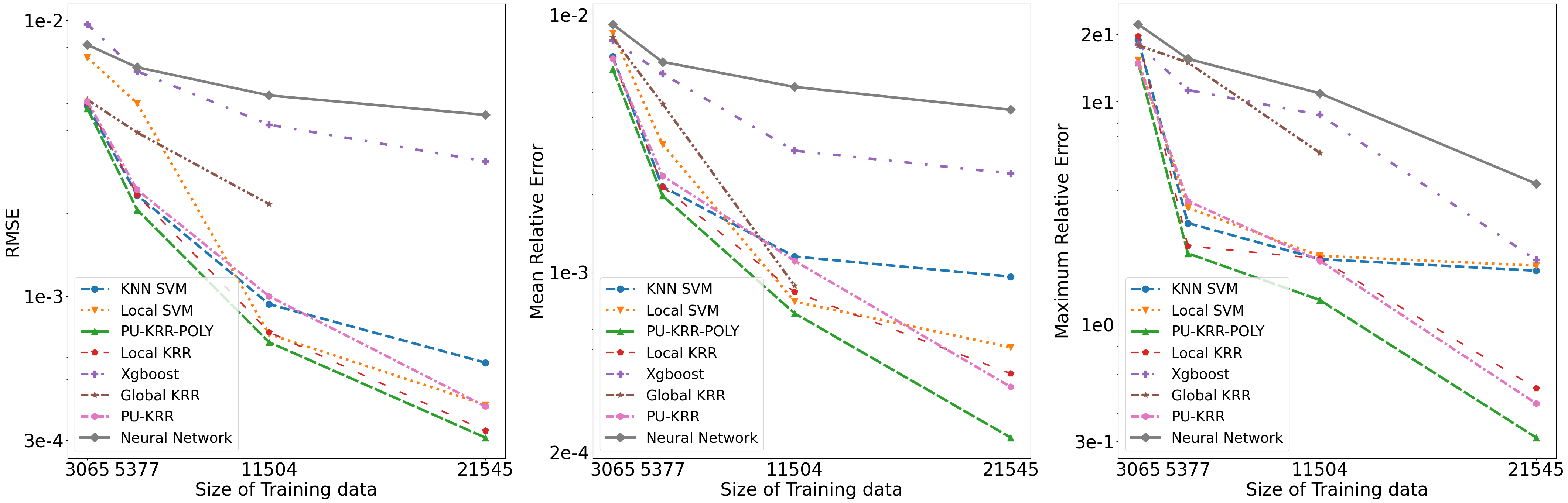}
    \caption{Error vs. Training size on PDE-on-sphere data}
    \label{fig:error_sample}
\end{figure}

\begin{figure}[hbtp!]
    \centering
 \centering
    \centering
    \includegraphics[width=0.41\linewidth]{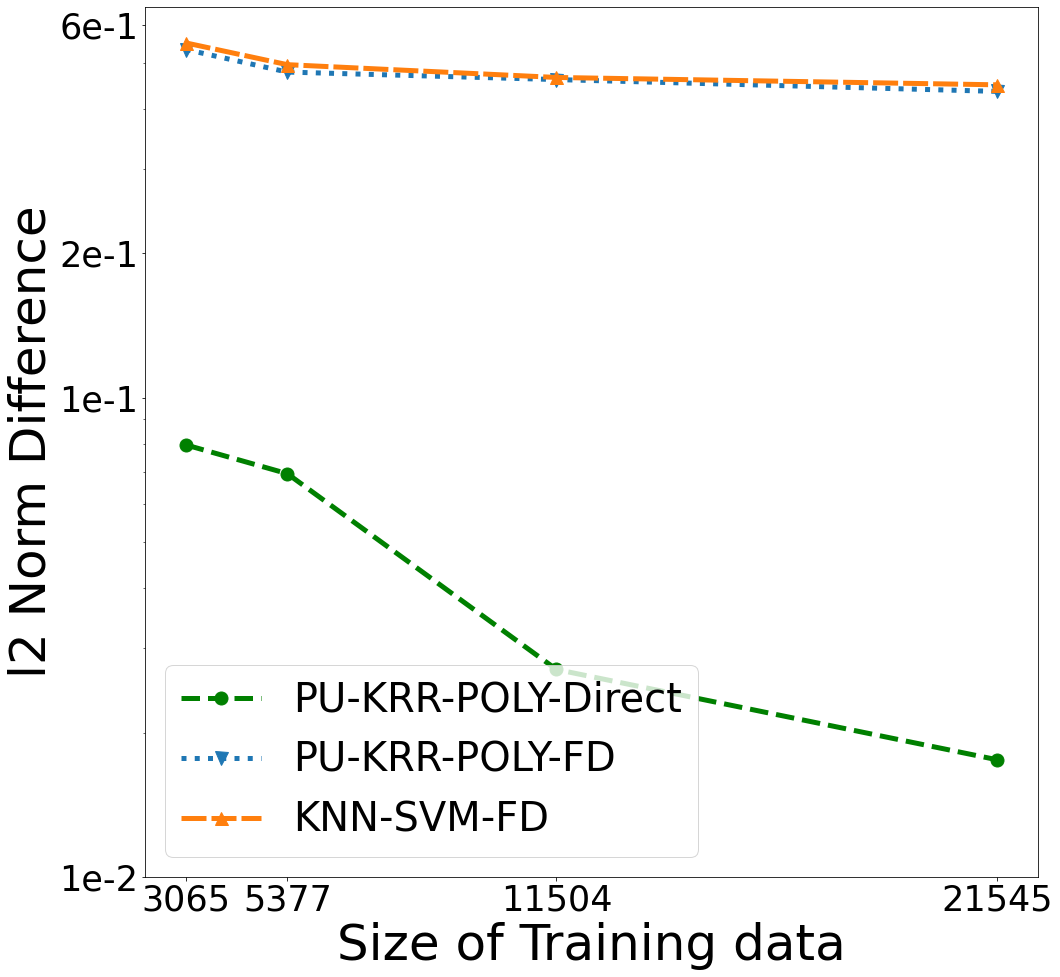}
    \caption{Average of $\ell_2$ norm difference in Gradient}
    \label{fig:3D_ball_and_grad}
\end{figure}

As we discuss in Sec \ref{sec:derivative}, our proposed PU-Stitched scheme can directly provide derivatives. We now test the ability of our scheme to compute surface gradients, despite the use of Cartesian coordinates. As our ground truth, we use a fourth-order accurate RBF-FD method from~\cite{SNKJCP2018} to compute the surface gradient on the ~92k icosahedral points. We then compute the surface gradient using PU-KRR-POLY by applying the surface gradient operator in Cartesian coordinates in place of the regular derivative operator in \eqref{eq:derivatives}. We show the average of the difference in pointwise $\ell_2$ norms between the RBF-FD gradient and the PU-KRR-POLY gradient in Figure \ref{fig:3D_ball_and_grad}. For comparison, we also computed surface gradients using forward differencing with the nearest Cartesian neighbor for both KNN-SVM and PU-KRR-POLY, and compared them to the RBF-FD surface gradient. The PU-KRR-POLY direct method is orders of magnitudes more accurate.


\subsection{Spatial Ozone Data} 
We compare PU-KRR-POLY with other methods on the Ozone levels from \cite{DI2019104909}, which are recorded on 1km by 1km grids in a lat-long bounding box of $[36.3,42.6] \times [-114.6,-108.4]$ over a mountainous part of the U.S.; they are plotted on the left in Figure ~\ref{fig:ozone_1}.   The prediction error $y_i - \hat{f}(x_i)$ for PU-KRR-POLY is plotted to its right. 
In all error measures, PU-KRR-POLY performs the best, with PU-KRR typically second, followed by the other stitched KRR models.  Relative error is less pertinent here as values are not close to $0$.  

The Appendix provides results on other generic UCI data sets designated for regression tasks, including ones in higher dimensions.  PU-KRR-POLY is typically, but not always the best -- unlike the data sets explored here, these generic data sets do not have a large benefit from local models in local regions that have different density of response variance.  

\begin{figure}
\centering
\begin{minipage}{0.43\linewidth}
    \includegraphics[width=\linewidth]{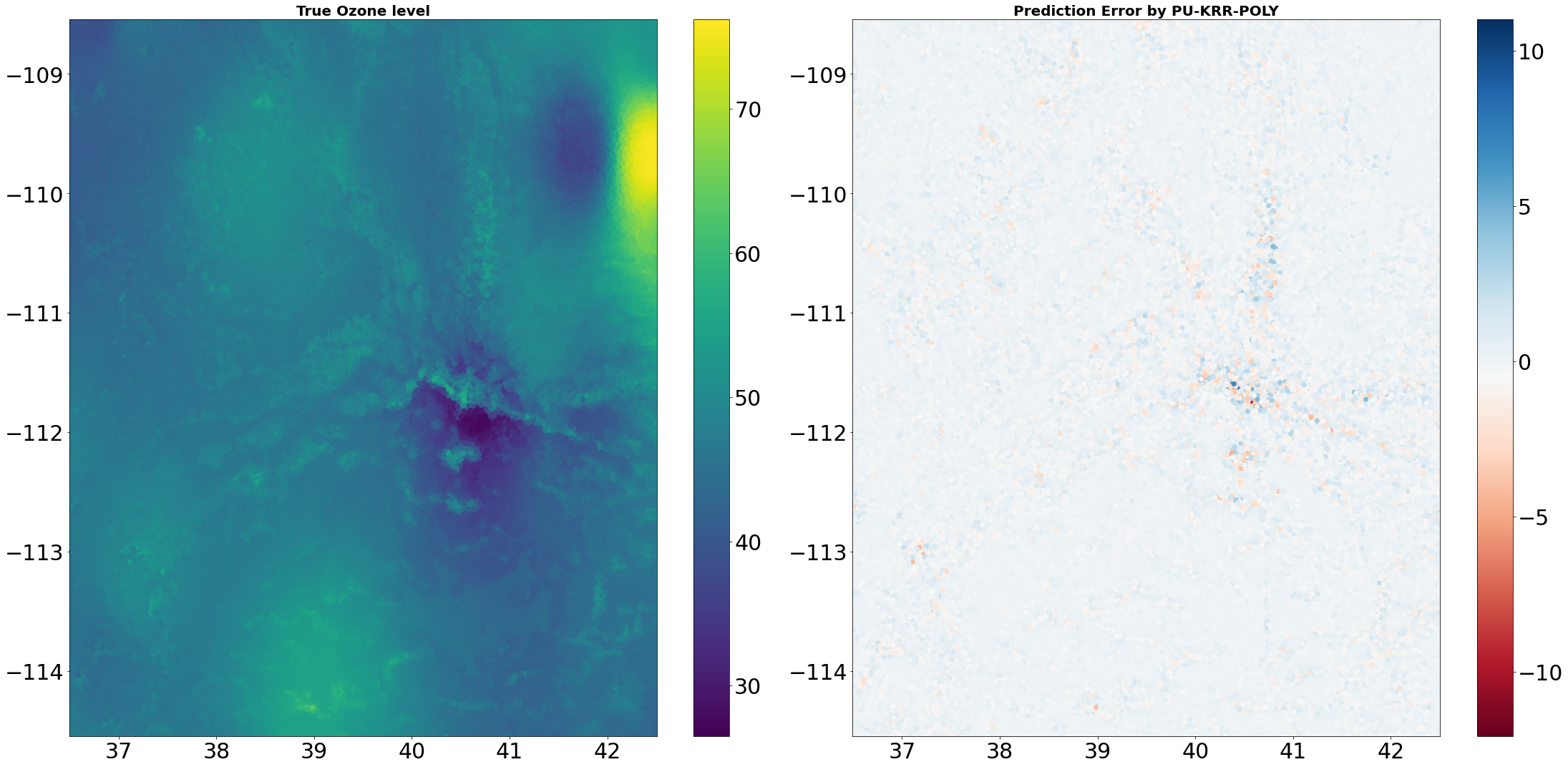}
\end{minipage}\hfill
\begin{minipage}{0.57\linewidth}
 \centering
\begin{tabular}{rccc}
  \hline
     &  \rmse & max rel.er & mean rel.er \\ \hline
PU-KRR & 0.554 & 0.303 & 0.007  \\ 
PU-KRR-POLY & 0.542  & 0.287  & 0.007 \\ 
XGBoost & 1.223 & 0.354 &  0.017 \\
Neural Network & 1.756 & 0.744  & 0.028 \\
KNN SVM &  0.611 & 0.291 &  0.008 \\
Local SVM & 0.729 & 0.332 & 0.010 \\ 
Local \krr & 0.548 & 0.312 & 0.007 \\ 
\hline
 \end{tabular}
\end{minipage}
\caption{Spatial Ozone Data, and error in recovery by methods}
\label{fig:ozone_1}
\end{figure}

\subsection{Combustion data}

To further illustrate motivations of PU-KRR-POLY, we consider a setting with data generated from an unknown PDE; that is the data is experimentally observed, and one knows it should be governed by an underlying system, but that system is not known.  We experiment in this setting on combustion data~\cite{GRI-Mech} generated from a PDE of the form  $S(\delta) + \frac{\chi(Z)}{2} + \pdv[2]{\delta}{Z}= 0$, with $S(\delta)$ is complicated and assumed not to be known (here the values of $S(\delta)$ are generated with a dimension-compressed solver~\cite{sutherland2009combustion}). In particular, for the experimental set-up, our training variables are $\delta$ and response variables are $S(\delta)$. 
The main goal is try to give a continuous, differentiable, and accurate predictions to $S(\delta)$ based on $\delta$.  In such cases, most neural ODE/PDE solvers~\cite{neural_ode_chen, neural_pde_solver_Johannes} need an explicit form of PDE equations and thus cannot be directly applied here.  We show the comparison results in Table ~\ref{combustion_table}, including against SIREN~\cite{SIREN}.  
Again PU-KRR-POLY outperforms all other methods by 1-2 orders of magnitude in RMSE, max relative error, and mean relative error.   

\begin{table}[h]
\centering 
\begin{tabular}{rccc}
\hline
     &  \rmse & max relative error & mean relative error \\ \hline
PU-KRR  &  3.335e-4 &  1.033e-2 & 1.807e-4 \\
PU-KRR-POLY & 9.303e-6  & 6.827e-3 & 1.041e-5 \\
KNN SVM & 5.561e-2 & 3.223e-1  & 6.372e-3  \\
Local SVM &  3.948e-2 & 9.844e-1 & 4.566e-2   \\ 
Local \krr & 7.639e-4 & 1.279e-2 & 1.994e-4 \\
XGBoost & 1.931e-3 & 1.086e0  &  4.569e-3\\
Neural Network & 5.516e-3  & 4.586e0 & 2.450e-2 \\
SIREN & 6.470e-2 & 4.336e0 & 3.571e-2 \\
\hline
    \end{tabular}
    \caption{Combustion Data Results}
    \label{combustion_table}
\end{table}

\vspace{-1mm}

\section{Discussion}

We describe new locally-adaptable regression model which can ensure $C^t$-continuity for any finite $t$, and for which derivatives can be automatically computed.  It leverages a partition-of-unity stitching of local models.   
We also propose and analyze a regression model that orthogonally mixes polynomial and kernel ridge terms with improved statistical convergence and empirical performance.  
We find the PU-KRR-POLY regression model is efficient and outperforms other advanced regression models, especially on data which benefit from locally-adaptable continuous models.

\paragraph*{Efficiency and Runtime}
Our method is reliant on nearest neighbor searching to both determine the points within a local neighborhood and also determine which local models cover a query point.  
Fortuitously, similarity search on vector data, has somewhat recently become extremely efficient~\cite{andoni2015practical,li2019approximate,ram2019revisiting,Johnson2021BillionScaleSS}.  

Let $T(n,d)$ be the time it takes to perform a nearest neighbor query on $n$ data points in $\R^d$.  While the best theoretical bounds are either mediocre or nuanced, in practice it is quite efficient.  
When each model region is set to have a constant number of points, so $h = O(1)$, then we can determine and build all local models in $O(n \cdot T(n,d))$ time.

To query the model at a single location $q \in \R^d$, we need to perform a reverse metric range query -- find all region balls $B_j$ which contain $q$.  To bound the complexity of this we introduce two common data-dependent parameters: the spread $\Lambda$ (measuring precision) and the doubling dimension $\ddim$ (measuring intrinsic dimensionality).  Specifically, define $\Lambda$ as the ratio between the largest and smallest radius of the model-containing balls.  
We group balls into levels depending on their radius, so within a level all radii are within a factor of $2$; there are at most $\log \Lambda$ such levels.  
For a point set $M$, the doubling dimension $\ddim$ is defined as the maximum of the log of a quantity over all balls $B$ of radius $r$; the quantity is the minimum number of balls or radius $r/2$ needed to cover $M \cap B$.  Note that $\ddim \leq d$ and typically much smaller for real high-dimensional data.  
Now within each level at most $O(2^\ddim)$ balls can contain a query $q$ since in the creation process, each new ball must contain a point not in any previous balls.  
So we query each level, and retrieve at most $O(2^\ddim)$ balls with centers within the largest radius of that level.  Evaluating a local model is $O(1)$ time for $h = O(1)$.  Hence, a query takes $O((2^\ddim + T(n,d))\log \Lambda)$ time.  In our settings, $2^\ddim$ and $\log \Lambda$ are small constants (typically not more than $10$), and this is quite efficient.  

The bottom line is that the model building step runs in time roughly linearly in $n$, and the model evaluation time runs in time significantly sublinear in $n$.  For the experiments we ran, PU-KRR-POLY had similar (or much faster) runtime to all other competing methods, and so scalability was not a concern relative to any other standard regression method.

Moreover, when a gradient is needed in $\mathbb{R}^d$, one simply needs to differentiate the PU-stitched model analytically, which in turn only requires derivatives of the Wendland functions, the kernel, and the polynomial terms. In contrast, typical approaches compute a discrete gradient with $d+\delta u_j$ for a small $\delta$ at each of $d$ orthogonal basis vectors $u_j$.  By not requiring these extra evaluations, our PU-stitched methods can save a factor of $d$, and as observed, improve accuracy.

\bibliographystyle{nipsbib}
\bibliography{nips}

\clearpage
\appendix

\section{Real Data Experiments from UCI Repository}\label{uci_data}

Here we provide results on other generic UCI data sets designated for regression tasks, including ones in higher dimensions.  PU-KRR-POLY is typically, but not always the best -- unlike the data sets explored here, these generic data sets do not have a large benefit from local models in local regions that has difference density of response variance.

Skillcraft data has 3395 observations and 19 features in total. See in \url{http://archive.ics.uci.edu/ml/datasets/skillcraft1+master+table+dataset}.
\begin{table}[h]
\centering
\begin{tabular}{rccc}
\hline
     &  \rmse & max relative error & mean relative error \\ \hline
PU \krr poly &  0.246 & 33.179 & 1.290 \\
PU \krr & 0.283 & 47.603 & 1.693 \\
Global \krr & 0.252 & 44.866 & 1.304 \\
        XGBoost & 0.257 & 57.889 & 1.492 \\
Neural Network & 0.249 & 37.656 & 1.293 \\
KNN SVM & 0.248 & 65.457 & 1.295 \\
Local SVM & 0.256 & 34.865 & 1.315 \\ 
Local \krr & 0.249 & 40.879 & 1.391 \\
\hline
\end{tabular}
   \caption{rmse $\&$ relative errors: skillcraft data}
   \label{tbl:2}
\end{table}

Airfoil data has 1503 observations and 6 features. See in \url{https://archive.ics.uci.edu/ml/datasets/airfoil+self-noise}.
\begin{table}[h]
\centering
\begin{tabular}{rccc}
\hline
     &  \rmse & max relative error & mean relative error \\ \hline
PU \krr poly & 1.702 & 13.114 & 0.601 \\
PU \krr  & 1.749 & 15.856 & 0.615 \\ 
Global \krr & 2.677 & 14.619 & 0.833 \\
XGBoost & 1.588 & 15.023 & 0.554\\ 
Neural Network & 1.826 & 15.926  & 0.791 \\ 
KNN SVM & 1.687 & 10.717 & 0.665 \\
Local SVM & 2.357 & 13.817 & 0.956  \\ 
Local \krr & 1.499 & 15.856 & 0.615 \\
\hline
\end{tabular}
   \caption{rmse $\&$ relative errors: air-foil data}
   \label{tbl:3}
\end{table}

Kin40k has 40000 observations and 8 features in total. See in \url{https://github.com/treforevans/uci_datasets}.
\begin{table}[h]
\centering
\begin{tabular}{rccc}
\hline
     &  \rmse & max relative error & mean relative error \\ \hline
PU \krr poly & 0.143 & 384.125 & 0.556  \\
PU \krr & 0.124 & 306.742 &  0.364 \\
Global \krr & 0.146  & 418.279 & 0.585 \\
XGBoost & 0.316 & 1134.082 & 1.445 \\
Neural Network & 0.139 & 227.806 & 0.494\\
KNN SVM & 0.141 & 218.083 & 0.459 \\
Local SVM & 0.145  & 384.432 &  0.557 \\ 
Local \krr & 0.124 & 106.743  & 0.364 \\
\hline
\end{tabular}
   \caption{rmse $\&$ relative errors: kin40k data}
   \label{tbl:4}
\end{table}

\section{Sampling strategy in 3D experiments}\label{appendix:sampling}

Let's denote the centers of bumpers in Figure ~\ref{fig:3d_ball} as $c_1$ and $c_2$. For each point $x$, we run two independent Bernoulli trials to determine it would be selected as a training data point or not. In the $1_{st}$ Bernoulli trial, the probability it would be selected as training data is $1 - \frac{\|x - c_1\|}{r_1}$, where $r_1$ is distance between $c_1$ and its furthest point. In the $2_{nd}$ Bernoulli trial, the probability it would be selected as training data is $1 - \frac{\|x - c_2\|}{r_2}$, where $r_2$ is distance between $c_2$ and its furthest point. A point is selected only if it was selected in either $1_{st}$ or $2_{nd}$ trial, so equivalent the probability it would be selected as training point with probability equal to $1 - (\frac{\|x - c_2\|}{r_2}) \cdot (\frac{\|x - c_1\|}{r_1})$.

\end{document}